\newcommand\defeq{:=}
\newcommand\nrseednodescluster[1]{L_{#1}}
\newcommand\seednodes{\mathcal{S}}
\newcommand\nrseednodes{S}
\newcommand\seednodescluster[1]{\mathcal{S}_{#1}}
\algnewcommand\algorithmicinput{\textbf{Input:}}
\algnewcommand\INPUT{\item[\algorithmicinput]}
\algnewcommand\algorithmicoutput{\textbf{Output:}}
\algnewcommand\OUTPUT{\item[\algorithmicoutput]}
\DeclareMathOperator*{\argmin}{arg\;min}
\newcommand\vect[1]{\mathbf #1}
\newcommand{\vw}{\vect{w}}
\newcommand{\vx}{\vect{x}}  
\newcommand{\vy}{\vect{y}}  
\newcommand{\vz}{\vect{z}}
\newcommand{\posneigh}[1]{\mathcal{N}_{#1}^{+}} 
\newcommand{\negneigh}[1]{\mathcal{N}_{#1}^{-}}
\newcommand{\mB}{\mathbf{B}}
\newcommand{\signalsize}{n}
\newcommand{\nDualLassoObj}{\mathcal{D}}
\newcommand{\graphsigs}{\mathbb{R}^{|\nodes|}} 
\newcommand{\edgesigs}{\mathbb{R}^{|\edges|}}
\newcommand{\edges}{\mathcal{E}}
\newcommand{\nLassoObj}{\mathcal{L}}
\newcommand{\nodes}{\mathcal{V}}
\newcommand{\graph}{\mathcal{G}}
\newcommand{\extgraph}{\widetilde{\mathcal{G}}}
\newcommand{\trainingset}{\mathcal{M}}
\newcommand{\samplingset}{\mathcal{M}}
\newcommand{\partition}{\mathcal{F}}
\newcommand{\pditer}{r}
\newcommand{\cluster}[1]{\mathcal{C}_{#1}}
\newcommand{\nrclusters}{F}
\newcommand{\primslp}{\widehat{\vx}}
\newcommand{\dualslp}{\widehat{\vy}}
\newcommand{\nrnodes}{n}
\newtheorem{theorem}{Theorem}
\newtheorem{proposition}[theorem]{Proposition}
\begin{document}
	
\title{Local Graph Clustering with Network Lasso}
\author{Alexander Jung, \IEEEmembership{Member, IEEE} and Yasmin SarcheshmehPour
	\thanks{AJ is with the Department of Computer Science, Aalto University, Finland. 
	YS is with the Department of Mathematical Sciences, Sharif University of Technology, Iran.}
}

\maketitle
\begin{abstract}
We study the statistical and computational properties of a network Lasso method 
for local graph clustering. The clusters delivered by nLasso can be characterized 
elegantly via network flows between cluster boundary and seed nodes. 
While spectral clustering methods are guided by a minimization of the 
graph Laplacian quadratic form, nLasso minimizes the total variation of 
cluster indicator signals. As demonstrated theoretically and numerically, 
nLasso methods can handle very sparse clusters (chain-like) which are 
difficult for spectral clustering. We also verify that a primal-dual 
method for non-smooth optimization allows to approximate nLasso solutions 
with optimal worst-case convergence rate. 
\end{abstract}
	
\section{Introduction}
\label{sec_intro}

Many application domains generate network structured data. Networked data 
arises in the study of self-organizing systems constituted by individual agents who 
can interact \cite{LiWang2020,NewmannBook}.  
Networked data also arises in computer vision where nodes represent individual pixels that 
are connected if they are close-by. Metrological observations collected by 
spatially distributed stations forms a network of time series with edges connecting 
close-by stations. We represent networked data conveniently using an 
``empirical'' or ``similarity'' graph $\graph$ \cite{SemiSupervisedBook,Luxburg2007}. 

The analysis of networked data is often facilitated by grouping or 
clustering the data points into coherent subsets of data points. Clustering methods 
aim at finding subsets (clusters) of data points that are more similar to each other 
than to the remaining data points. Most existing clustering methods are unsupervised as 
they do not require the true cluster assignments for any data point \cite{Li2020,PhysRevE.91.012801,Ng2001,Luxburg2007,Li2019,LiBu2018}. 

Local graph clustering starts from few ``seed nodes'' and explore their 
neighbourhoods to find clusters around them \cite{Veldt2019,Spielman_alocal}. 
With a runtime depending only on the resulting clusters, these methods are 
attractive for massive graphs  \cite{Spielman_alocal,IntroLGCAlgSoftw2018}.   

Spectral clustering methods use the eigenvectors of the graph Laplacian matrix to 
approximate the indicator functions of clusters \cite{ChungSpecGraphTheory,SpielSGT2012,Luxburg2007,Ng2001,Andersen06}. 
These methods are computationally attractive as they amount to linear systems which 
can be implemented as scalable message passing protocols \cite{Dimakis2010,Xiao07}. 
Our approach differs from spectral clustering in the approximation 
of the cluster indicators. 


To approximate cluster indicators, we use the solutions of a particular instance 
of the network Lasso (nLasso) optimization problem \cite{NetworkLasso}. 
We solve this nLasso clustering problem using an efficient primal-dual method. 
This primal-dual method has attractive convergence guarantees and can be implemented 
as scalable message passing (see Section \ref{sec_computational_aspects}). 

Building on our recent work on the duality between TV minimization and 
network flow optimization \cite{JungDualitynLasso,JungTVMin2019}, we 
show that the proposed nLasso clustering method can be interpreted 
(in a precise sense) as a flow-based clustering method \cite{Veldt2016,Wang2017,Veldt2019,Lang2004}. 
As detailed in Section \ref{sec_nLasso_dual}, our nLasso problem (and its dual 
flow optimization problem) is similar but different from the TV minimization problems 
(and its dual flow optimization problems) studied in \cite{JungDualitynLasso,JungTVMin2019}. 

Compared with spectral methods, flow-based methods (including our approach) 
better handle sparsely connected (chain-like) clusters (see Section \ref{sec_num_experiment}) 
and are more robust to ``structural  heterogeneities'' \cite{Wang2017,Jeub2015}. 
In contrast to existing flow-based local clustering methods, our approach is based 
on efficient convex optimization methods instead of computationally expensive 
combinatorial algorithms.

This paper makes the following contributions: 
\begin{itemize} 
\item Section \ref{sec_local_graph_clustering} formulates local graph clustering as a particular 
instance of the nLasso problem.
\item In Section \ref{sec_nLasso_dual} we derive the dual problem of the nLasso. We provide an interpretation 
of this dual problem as an instance of network flow optimization.  
\item Section \ref{sec_computational_aspects} presents a local clustering method by 
applying a primal-dual method to the nLasso problem. This method is appealing for 
big data applications as it can be implemented as a scalable message-passing method. 
\item Section \ref{sec_stat_prop} characterizes the clusters delivered by nLasso in terms 
of the amount of flow that can be routed from cluster boundaries to the seed nodes 
within that cluster. This offers a novel link between flow-based clustering and convex optimization. 
\end{itemize}

\section{Local Graph Clustering}
\label{sec_local_graph_clustering}

We consider networked data which is represented by a simple undirected 
weighted graph $\graph=\big( \nodes, \edges, \mathbf{W} \big)$. The 
nodes $\nodes =\{1,\ldots,\nrnodes\}$ represent individual data points. 
Undirected edges $e = \{i,j\} \in \edges$ connect similar data points $i,j \in \nodes$ 
and are assigned a positive weight $W_{i,j}\!>\!0$. Absence of an edge between
nodes $i,j\!\in\!\nodes$ implies $W_{i,j}\!=\!0$. 
The neighbourhood of a node $i\!\in\!\nodes$ is $\mathcal{N}_{i}\!\defeq\!\{j\!\in\!\nodes: \{i,j\}\!\in\!\edges \}$. 

It will be convenient to define a directed version of the graph $\graph$ by 
replacing each undirected edge $\{i,j\}$ by the directed edge $\big( {\rm min}\{i,j\}, \max\{i,j\} \big)$.  
We overload notation and use $\graph$ to denote the undirected and directed version 
of the empirical graph. The directed neighbourhoods of a node $i \in \nodes$ are 
\begin{equation}
\mathcal{N}_{i}^{+}\!\defeq\!\{ j\!\in\!\nodes\!:\!(i,j)\!\in\!\edges \}\mbox{, and }\mathcal{N}_{i}^{-}\!\defeq\!\{ j\!\in\!\nodes: (j,i)\!\in\!\edges \}.
\end{equation}

Local graph clustering starts from a given set of seed nodes 
\begin{equation} 
\seednodes = \{i_{1}, \ldots, i_{|\nrseednodes|} \} \subset \nodes. 
\end{equation}
The seed nodes might be obtained by exploiting domain knowledge and 
are grouped into batches $\seednodescluster{k}$, 
\begin{equation} 
\label{equ_def_seed_nodes_batches}
\seednodes = \seednodescluster{1} \cup \ldots \cup \seednodescluster{\nrclusters}. 
\end{equation}
Each batch contains $\nrseednodescluster{k}$ seed nodes of the same cluster $\cluster{k}$.

We allow the number of seed nodes to be a vanishing fraction of the entire graph. 
This is an extreme case of semi-supervised learning where the 
labelling ratio (viewing seed nodes as labeled data points) goes to zero. 

The proposed local graph clustering method (see Section \ref{equ_def_seed_nodes_batches}) 
operates by 
exploring the neighbourhoods of the seed nodes $\seednodes$. It  
constructs clusters $\cluster{k}$ around the seed nodes $\seednodescluster{k}$ 
such that only few edges leave the cluster $\cluster{k}$. 

We characterize a cluster $\cluster{k}$ via its boundary 
\begin{equation}
\partial \cluster{k} \defeq \{ (i,j) \in \edges: i \in \cluster{k}, j \notin \cluster{k} \}. 
\end{equation} 
A good cluster $\cluster{k}$ is such that the total weight of the edges in its boundary 
$\partial \cluster{k}$ is small. We make this characterization more precise in Section 
\ref{sec_stat_prop} using network flows to quantify the connectivity 
between cluster boundary and seed nodes.

\section{The Network Lasso and Its Dual}
\label{sec_nLasso_dual}

Local graph clustering methods learn graph signals $\hat{\vx} \in \mathbb{R}^{\nodes}$ 
that are good approximations to the indicator signals $\vx^{(k)}=\big(x_{1}^{(k)},\ldots,x_{\nrnodes}^{(k)}\big)^{T} \in \mathbb{R}^{\nodes}$. These indicator signals  
represent the clusters $\cluster{k} \subseteq \nodes$ around the seed nodes 
$\seednodescluster{k}$ (see \eqref{equ_def_seed_nodes_batches}) via 
\begin{equation}
 x_{i}^{(k)} = \begin{cases} 1 & \mbox{ if } i \in \cluster{k} \\ 0 & \mbox{ otherwise.} \end{cases}. 
\end{equation} 
 
Spectral graph clustering uses eigenvectors of the graph Laplacian matrix 
to approximations to cluster indicator signals. In contrast, we use TV minimization 
to learn approximations $\hat{\vx}$ to the cluster indicators $\vx^{(k)}$, for $k=1,\ldots,\nrclusters$. 

We have recently explored the relation between network flow problems and TV minimization 
\cite{JungDualitynLasso,JungTVMin2019}. Loosely speaking, the solution of TV minimization 
is piece-wise constant over clusters whose boundaries have a small total weight. This property 
motivates us to learn the indicator function for the cluster $\cluster{k}$ around the seed nodes 
$\seednodescluster{k}$ by solving 
\begin{equation}
\label{equ_def_nLasso}
\hat{\vx} \in \argmin_{\vx \in \mathbb{R}^{\nodes}} \sum_{i \in \seednodescluster{k}} (x_{i}\!-\!1)^2/2\!+\! \sum_{i \notin \seednodescluster{k}} \alpha x_{i} ^2/2\!+\!\lambda \| \vx \|_{\rm TV}. 
\end{equation}
Here, we used the total variation (TV)
\begin{equation} 
\label{eq_def_TV}
\| \vx \|_{\rm TV} = \sum_{\{i,j\} \in \edges} W_{i,j} |x_{i} - x_{j}|. 
\end{equation}
Note that \eqref{equ_def_nLasso} is a non-smooth convex optimization problem. It is 
a special case of the nLasso problem \cite{NetworkLasso}. 

We solve a separate nLasso problem \eqref{equ_def_nLasso} for each batch $\seednodescluster{k}$, 
for $k\!=\!1,\ldots,\nrclusters$, of seed nodes in the same cluster. The nodes $i\!\notin\!\seednodescluster{k}$ 
which are not seed nodes for $\cluster{k}$ belong to one of two groups. 
One group $\cluster{k}\setminus \seednodescluster{k}$ of nodes which belong to $\cluster{k}$ 
and the other group of nodes $i \notin \cluster{k}$ outside the cluster. 

The special case of \eqref{equ_def_nLasso} when $\alpha\!=\!0$ is studied in 
\cite{JungDualitynLasso}. We can also interpret \eqref{equ_def_nLasso} as TV minimization 
using soft constraints instead of hard constraints \cite{JungTVMin2019}. While 
\cite{JungTVMin2019} enforces $\hat{x}_{i}$  for each seed node $i \in \seednodescluster{k}$, \eqref{equ_def_nLasso} 
uses soft constraints such that typically $\hat{x}_{i}\!<\!1$ at seed nodes $i\!\in\!\seednodescluster{k}$. Spectral methods 
use optimization problems similar to \eqref{equ_def_nLasso} but with the Laplacian quadratic 
form $\sum_{\{i,j\} \in \edges} W_{i,j} (x_{i}\!-\!x_{j})^2$ instead of TV \eqref{eq_def_TV}. 

We hope that any solution to \eqref{equ_def_nLasso} is a good approximation to the 
indicator function $x^{(k)}$ of a well-connected subset around the seed nodes $\seednodescluster{k}$. 
We use the graph signal $\hat{x}: i \mapsto \hat{x}_{i}$ obtained from solving \eqref{equ_def_nLasso} to 
determine a reasonable cluster $\cluster{k}\!\supseteq\!\seednodescluster{k}$. 
 
The idea of determining clusters via learning graph signals as (approximations) of indicator 
functions of good clusters is also underlying spectral clustering \cite{Luxburg2007}. Instead 
of TV minimization underlying nLasso \eqref{equ_def_nLasso}, spectral clustering uses the 
matrix Laplacian to score candidates for cluster indicator functions. Moreover, spectral clustering 
methods do not require any seed nodes with known cluster assignment. 

The choice of the tuning parameters $\alpha$ and $\lambda$ in \eqref{equ_def_nLasso} 
crucially influence the behaviour of the clustering method and the 
properties of clusters delivered by \eqref{equ_def_nLasso}. Their choice can 
be based on the intuition provided by a minimum cost flow problem that is 
dual (equivalent) to nLasso \eqref{equ_def_nLasso}. This minimum cost flow 
problem is not defined directly on the empirical graph $\graph$ but the augmented 
graph $\widetilde{\graph}=\big(\widetilde{\nodes},\widetilde{\edges}\big)$. 
This augmented graph is obtained by augmenting the graph $\graph$ with an 
additional node ``$\star$'' and edges $(i,\star)$ for each node $i \in \nodes$. 

As detailed in the supplementary material, the nLasso \eqref{equ_def_nLasso} is equivalent 
(dual) to the minimum cost flow problem  \cite{JungTVMin2019,JungDualitynLasso}
\begin{align} 
 \min_{y \in \mathbb{R}^{\edges}}  \sum_{i \in \seednodescluster{k}} (y_{(i, \star)}-1)^2 & + (1/\alpha) \sum_{i \notin \seednodescluster{k}} y_{(i, \star)}^2 \label{equ_dual_min_cost_flow} \\
 \mbox{s.t.} \sum_{j \in \posneigh{i}} y_{(i,j)} & =  \sum_{j \in \negneigh{i}} y_{(j,i)} \mbox{for all nodes } i\in \widetilde{\nodes}   \label{equ_flow_cons}  \\
   | y_{e} | & \leq \lambda W_{e} \mbox{ for all } e \in \edges.  \label{equ_cap_constraint}
\end{align} 
The constraints \eqref{equ_flow_cons} enforce conservation of 
the flow $y_{e}$ at every node $i\!\in\! \widetilde{\nodes}$. The constrains \eqref{equ_cap_constraint} enforce 
the flow $y_{e}$ not exceeding the edge capacity $\lambda W_{e}$. There are no 
capacity constrains for augmented edges $(i,\star)$ with $i\!\in\!\nodes$. 

The node signal $\hat{\vx}$ solves \eqref{equ_def_nLasso} and the edge 
signal $\hat{\vy}$ solve \eqref{equ_dual_min_cost_flow}, respectively, if and only if  \cite[Ch. 31]{RockafellarBook}
\begin{align} 
-\sum_{j \in \posneigh{i}} \hat{y}_{(i,j)} + \sum_{j \in \negneigh{i}} \hat{y}_{(j,i)} & =  \hat{x}_{i}\!-\!1 \mbox{ for } i  \in \seednodescluster{k} \label{equ_pd_optimal_first_demand} \\[2mm]
-\sum_{j \in \posneigh{i}} \hat{y}_{(i,j)} + \sum_{j \in \negneigh{i}} \hat{y}_{(j,i)} & =  \alpha \hat{x}_{i}  \mbox{ for } i  \notin \seednodescluster{k} \label{equ_pd_optimal_second_demand} \\[2mm]
|\hat{y}_{e}| \leq \lambda W_{e}  \mbox{ for all edges } e\in \edges \label{equ_opt_conditoin_cap} \\[2mm] 
& \hspace*{-40mm} \hat{x}_{i}\!-\!\hat{x}_{j}\!=\!0 \mbox{ for } e\!=\!(i,j) \in \edges \mbox{ with } |\hat{y}_{(i,j)}|\!<\!\lambda W_{e}. \label{equ_pd_optimal_non_staturated} 
\end{align} 

We can interpret conditions \eqref{equ_pd_optimal_first_demand}, \eqref{equ_pd_optimal_second_demand} as 
conservation laws satisfied by any flow $\hat{y}_{e}$ that solves the nLasso dual \eqref{equ_dual_min_cost_flow}. 
We can think of injecting (extracting) a flow of value $\hat{x}_{i}\!-\!1$ at seed nodes  $i\!\in\!\seednodescluster{k}$. 
The nodes $i \notin \seednodescluster{k}$ are leaking a flow of value $\alpha \hat{x}_{i}$. 
The optimal flow $\hat{y}_{e}$ has to provide these demands while respecting the capacity 
constraints \eqref{equ_opt_conditoin_cap}. 

We illustrate the conditions \eqref{equ_pd_optimal_first_demand}-\eqref{equ_pd_optimal_non_staturated} in 
Fig.\ \ref{fig:duality} for a simple chain graph. According to \eqref{equ_pd_optimal_non_staturated}, 
the nLasso solution $\hat{\vx}$ can only change across edges $e=(i,j)$ which are saturated 
$|\hat{y}_e| = \lambda W_{e}$. For a chain graph, using a suitable choice for $\alpha$ and $\lambda$ 
in \eqref{equ_def_nLasso}, nLasso is able to recover a cluster structure as soon as the weights of boundary 
edges exceeds the weights of intra-cluster edges.
 
\vspace*{0mm}
\begin{figure}[htbp]
	\vspace{-2mm}
	\includegraphics[width=0.9\columnwidth]{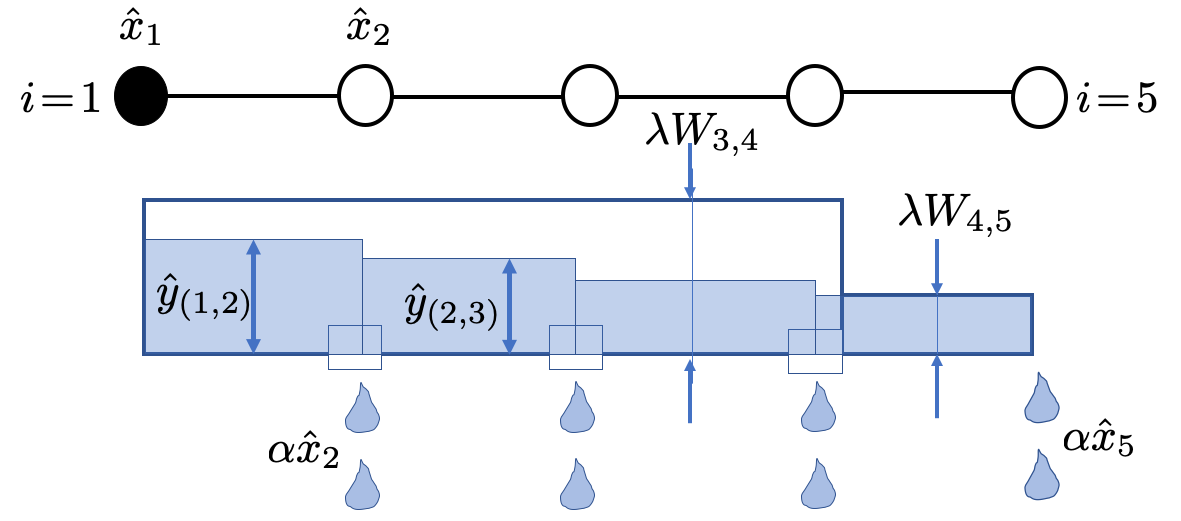}
	\vspace*{-1mm}
	\caption{The dual \eqref{equ_dual_min_cost_flow} of nLasso optimizes the flow through a leaky 
		network obtained from the empirical graph $\graph$. 
} \label{fig:duality}
\vspace*{-1mm}
\end{figure}

We use the optimality condition \eqref{equ_pd_optimal_first_demand}-\eqref{equ_pd_optimal_non_staturated} 
to characterize the solutions of nLasso \eqref{equ_def_nLasso} in Section \ref{sec_stat_prop}. 
Combining this characterization with generative models for the clusters $\cluster{k}$, such 
as stochastic block models, allows to derive sufficient conditions on the parameters of the generative 
model such that solutions of \eqref{equ_def_nLasso} allow to recover the true underlying local clusters \cite{JuPLSBMAsiloma2020}. 

\vspace*{0mm}
\section{Computational Aspects}
\label{sec_computational_aspects}
\vspace*{0mm}

The necessary and sufficient conditions \eqref{equ_pd_optimal_first_demand}-\eqref{equ_pd_optimal_non_staturated} 
characterize any pair of solutions for \eqref{equ_def_nLasso} and its dual \eqref{equ_dual_min_cost_flow}. We 
can find solutions to the conditions \eqref{equ_pd_optimal_first_demand}-\eqref{equ_pd_optimal_non_staturated}, 
which provides a solution to nLasso in turn, by reformulating those coupled condition as a fixed point equation. 

There are many different fixed-point equations that are equivalent to 
the optimality conditions \eqref{equ_pd_optimal_first_demand}-\eqref{equ_pd_optimal_non_staturated}. 
We will use a particular construction which results in a method that is 
guaranteed to converge to a solution of \eqref{equ_def_nLasso} and \eqref{equ_dual_min_cost_flow} 
and can be implemented as a scalable message passing on the empirical 
graph $\graph$. This construction is discussed in great detail in \cite{pock_chambolle} 
and has been applied to the special case of nLasso \eqref{equ_def_nLasso} for $\alpha=0$ in 
our recent work \cite{JungDualitynLasso}. For local graph clustering, we need 
$\alpha>0$ to force the solutions of \eqref{equ_def_nLasso} to decay towards zero 
outside the local cluster around the seed nodes $\seednodescluster{k}$.

Applying tools from \cite{JungDualitynLasso}, we obtain the following updates 
generating two sequences $\hat{x}_{i}^{(\pditer)}$ and $\hat{y}_{e}^{(\pditer)}$, 
for $\pditer=0,1,\ldots$, converging to solutions of \eqref{equ_def_nLasso} and 
\eqref{equ_dual_min_cost_flow}, respectively. 
\begin{align} 
\vspace*{-2mm}
\tilde{x}_{i}  & \!\defeq\! 2 \hat{x}^{(\pditer)}_{i} - \hat{x}^{(\pditer\!-\!1)}_{i} \mbox{ for } i\!\in\!\nodes \label{equ_pd_first_update} \\[2mm]
\hat{y}^{(\pditer\!+\!1)}_{e} &\!\defeq\!\hat{y}^{(\pditer)}_{e}\!+\! (1/2)  (\tilde{x}_{i}\!-\!\tilde{x}_{j})\mbox{ for } e=(i,j)\!\in\!\edges \label{equ_pd_two}  \\[2mm]
\hat{y}^{(\pditer\!+\!1)}_{e} &\!\defeq\! \hat{y}_{e}^{(k\!+\!1)}\!/\!\max\{1, |\hat{y}_{e}^{(\pditer\!+\!1)}|/(\lambda W_{e}) \}  \mbox{ for } (i,j)\!\in\!\edges   \label{equ_pd_three}  \\[2mm]
\hat{x}^{(\pditer\!+\!1)}_{i}  & \!\defeq\! \hat{x}^{(\pditer)}_{i}\!-\!\gamma_{i} \bigg[\hspace*{-1mm}\sum_{j\!\in\!\mathcal{N}_{i}^{+}} \hspace*{-1mm}\hat{y}^{(\pditer\!+\!1)}_{(i,j)}  \!-\!\hspace*{-1mm}\sum_{j\!\in\!\mathcal{N}_{i}^{-}} \hspace*{-1mm}\hat{y}^{(\pditer\!+\!1)}_{(j,i)} \bigg] \mbox{ for }  i\!\in\!\nodes \label{equ_pd_four} \\[2mm]
\hat{x}_{i}^{(\pditer\!+\!1)} &\!\defeq\!  \big(\gamma_{i}\!+\!\hat{x}^{(\pditer\!+\!1)}_{i}\big)/(\gamma_{i}\!+\!1) \mbox{ for every } i\!\in\!\seednodescluster{k}   \label{equ_pd_five} \\[2mm]
\hat{x}_{i}^{(\pditer\!+\!1)} &\!\defeq\! \hat{x}^{(\pditer\!+\!1)}_{i}/(\alpha\gamma_{i}\!+\!1) \mbox{ for every } i\!\in\!\nodes \setminus \seednodescluster{k}  \label{equ_pd_xic}. 
\end{align}
Here, $\gamma_{i} = 1/d_{i}$ is the inverse of the node degree $d_{i} = \big|\mathcal{N}_{i}\big|$.
Starting from an arbitrary initialization $\hat{x}^{(0)}_{i}$ and $\hat{y}^{(0)}_{e}$, the iterates 
$\hat{x}^{(\pditer)}_{i}$ and $\hat{y}^{(\pditer)}_{e}$ converge to a solution of nLasso \eqref{equ_def_nLasso} 
and its dual \eqref{equ_dual_min_cost_flow}, respectively \cite{He2014}. 

The updates \eqref{equ_pd_first_update}-\eqref{equ_pd_xic} define a message-passing on 
the empirical graph $\graph$ to jointly solve nLasso \eqref{equ_def_nLasso} 
and its dual \eqref{equ_dual_min_cost_flow}. The computational complexity of 
one full iteration is proportional to the number of edges in the empirical graph. The overall 
complexity also depends on the number of iterations required to ensure the iterate 
$\hat{x}_{i}^{(\pditer)}$ being sufficiently close to the nLasso \eqref{equ_def_nLasso} 
solution. 

Basic analysis of proximal methods shows that the number of required iterations 
required scales inversely with the required sub-optimality of $\hat{x}_{i}^{(\pditer)}$ 
(see \cite{pock_chambolle_2016,Combettes2009}). This convergence rate cannot be 
improved for chain graphs \cite{ComplexitySLP2018}. For a fixed number of iterations 
and empirical graphs with bounded maximum node degree, the computational complexity 
of our method scales linearly with the number of nodes (data points). 

We now develop an interpretation of the updates \eqref{equ_pd_first_update}-\eqref{equ_pd_xic} 
as an iterative method for network flow optimization. 
The update \eqref{equ_pd_three} enforces the capacity constraints \eqref{equ_cap_constraint} to be 
satisfied for the flow iterates $\hat{y}^{(\pditer)}$. The update \eqref{equ_pd_four} amounts to adjusting the current 
nLasso estimate $\hat{x}_{i}^{(\pditer)}$, for each node $i \in\!\nodes$ by the 
demand induced by the current flow approximation $\hat{y}^{(\pditer)}$. 

Together with the updates \eqref{equ_pd_five} and \eqref{equ_pd_xic}, the update 
\eqref{equ_pd_four} enforces the flow $\hat{y}^{(\pditer)}$ to satisfy the conservation 
laws \eqref{equ_pd_optimal_first_demand} and \eqref{equ_pd_optimal_second_demand}. 
The update \eqref{equ_pd_two} aims at enforcing \eqref{equ_pd_optimal_non_staturated} 
by adjusting the cumulated demands $\hat{x}^{(\pditer)}_{i}$ via the flow $\hat{y}^{(\pditer)}_{(i,j)}$ 
through an edge $e=(i,j) \in \edges$ according to the difference $(\tilde{x}_{i} - \tilde{x}_{j})$. 

The above interpretation helps to guide the choice for the parameters $\alpha$ and 
$\lambda$ in \eqref{equ_def_nLasso}. The edge capacities $\lambda W_{e}$ limit the 
rate by which the  values $\hat{x}^{(\pditer)}_{i}$ can be ``build up''. Choosing $\lambda$ 
too small would, therefore, slow down the convergence of $\hat{x}^{(\pditer)}_{i}$. On the 
other hand, using nLasso \eqref{equ_def_nLasso} with too large $\lambda$ does not allow 
to detect small local clusters $\cluster{k}$ (see Section \ref{sec_stat_prop}).

\section{Cluster Characterization}
\label{sec_stat_prop}

We use the solution $\hat{x}_{i}$ of nLasso \eqref{equ_def_nLasso} to approximate 
the indicator of a local cluster around the seed nodes $\seednodescluster{k}$. The 
cluster delivered by our method is obtained by thresholding,
\begin{equation} 
\label{equ_def_cluster}
\cluster{k} \defeq \{ i \in \nodes: \hat{x}_{i} > 1/2 \}. 
\end{equation} 
In practice we replace the exact nLasso solution $\hat{x}_{i}$ in \eqref{equ_def_cluster} 
with the iterate $\hat{x}^{(\pditer)}_{i}$ obtained after a sufficient number $\pditer$ 
of primal-dual updates \eqref{equ_pd_first_update}-\eqref{equ_pd_xic} (see Section \ref{sec_num_experiment}). 
The threshold $1/2$ is \eqref{equ_def_cluster} is somewhat arbitrary. Our theoretical 
results can be easily adapted for other choices for the threshold. The question if there 
exists an optimal choice for the threshold and what this actually means precisely is 
beyond the scope of this paper.

Our main theoretical result is a necessary condition on the cluster \eqref{equ_def_cluster} 
and the nLasso parameters $\alpha$ and $\lambda$ (see \eqref{equ_def_nLasso}). 
\begin{proposition}
	\label{prop_main_charac_cluster}
	Consider the cluster \eqref{equ_def_cluster} obtained from the nLasso solution. 
	Then, if $\seednodescluster{k} \subseteq \cluster{k}$, 
	\begin{equation} 
	\label{equ_cond_boundary_injecting}
	 \lambda \sum_{e \in \partial \cluster{k}} W_{e}  \leq  1- (\alpha/2) \sum_{i \in \cluster{k} \setminus \seednodescluster{k}} \hat{x}^{(\pditer)}_{i}.
	\end{equation}
	and 
	\begin{equation} 
	\label{equ_cond_boundary_absorb}
	\lambda \sum_{e \in \partial \cluster{k}} W_{e}  \leq \alpha \sum_{i \notin \cluster{k} } \hat{x}^{(\pditer)}_{i}.  
	\end{equation}
\end{proposition}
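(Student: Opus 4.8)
The plan is to convert the optimality conditions \eqref{equ_pd_optimal_first_demand}--\eqref{equ_pd_optimal_non_staturated} into a statement about the net flow that $\hat{\vy}$ pushes across the cut $\partial\cluster{k}$, and then read off the two inequalities by summing the nodewise demands once over $\cluster{k}$ and once over its complement. As a preliminary normalization I would argue that any solution of \eqref{equ_def_nLasso} satisfies $0\le\hat{x}_i\le1$: clipping every coordinate into $[0,1]$ never increases a data term (it moves seed values toward $1$ and non-seed values toward $0$), and since the clip map $t\mapsto\min(1,\max(0,t))$ is $1$-Lipschitz it satisfies $|c(x_i)-c(x_j)|\le|x_i-x_j|$, so it cannot increase $\|\vx\|_{\rm TV}$ either; hence the clipped signal is again optimal and we may assume $\hat{x}_i\in[0,1]$. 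In particular, because $\seednodescluster{k}\subseteq\cluster{k}$, every seed node obeys $\hat{x}_i>1/2$, so $1-\hat{x}_i<1/2$.

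The core identity I would establish is a discrete flux balance. Summing the demand equations \eqref{equ_pd_optimal_first_demand}--\eqref{equ_pd_optimal_second_demand} over all nodes of a set $U\subseteq\nodes$, every edge $(a,b)$ with both endpoints in $U$ contributes $-\hat{y}_{(a,b)}$ at $a$ and $+\hat{y}_{(a,b)}$ at $b$ and therefore cancels; what survives is exactly the flow across the cut separating $U$ from $\nodes\setminus U$. Taking $U=\cluster{k}$ yields
\[
\sum_{i\in\seednodescluster{k}}(\hat{x}_i-1)+\alpha\!\!\sum_{i\in\cluster{k}\setminus\seednodescluster{k}}\!\!\hat{x}_i
\;=\;\sum_{e\in\partial^{-}}\hat{y}_e-\sum_{e\in\partial\cluster{k}}\hat{y}_e,
\]
where $\partial\cluster{k}$ collects the cut edges oriented out of $\cluster{k}$ and $\partial^{-}$ those oriented into it.

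The crux — and the step I expect to be hardest to pin down rigorously — is the \emph{sign} of the flow on each cut edge. By \eqref{equ_pd_optimal_non_staturated} an edge can be unsaturated only where $\hat{x}$ is constant; since every cut edge joins a node with $\hat{x}_i>1/2$ to a node with $\hat{x}_j\le1/2$, all cut edges are saturated, $|\hat{y}_e|=\lambda W_e$. The subdifferential relation underlying \eqref{equ_pd_optimal_non_staturated} (equivalently, the direction in which the update \eqref{equ_pd_two} drives $\hat{y}_e$) forces $\hat{y}_{(i,j)}$ to carry the sign of $\hat{x}_i-\hat{x}_j$, so each cut edge pushes flow from its high-$\hat{x}$ (inside) endpoint to its low-$\hat{x}$ (outside) endpoint. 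Substituting these signs, the outward flux through the \emph{full} cut equals $\lambda\,(\sum_{e\in\partial\cluster{k}}W_e+\sum_{e\in\partial^{-}}W_e)$, which is at least $\lambda\sum_{e\in\partial\cluster{k}}W_e$.

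It then remains to assemble the bounds. The balance for $U=\nodes\setminus\cluster{k}$, whose nodes are all non-seeds, equates the absorbed demand $\alpha\sum_{i\notin\cluster{k}}\hat{x}_i$ with the inward flux, which is $\ge\lambda\sum_{e\in\partial\cluster{k}}W_e$; this is \eqref{equ_cond_boundary_absorb}. For \eqref{equ_cond_boundary_injecting} I would use the $U=\cluster{k}$ balance to obtain $\lambda\sum_{e\in\partial\cluster{k}}W_e\le\sum_{i\in\seednodescluster{k}}(1-\hat{x}_i)-\alpha\sum_{i\in\cluster{k}\setminus\seednodescluster{k}}\hat{x}_i$, then weaken the coefficient $\alpha$ to $\alpha/2$ (legitimate since each $\hat{x}_i>0$, which is exactly where the factor $\tfrac12$ appears) and control the total seed injection $\sum_{i\in\seednodescluster{k}}(1-\hat{x}_i)\le1$ using $\hat{x}_i\in(1/2,1]$ — for a single seed node this is immediate, and it is the origin of the constant $1$ on the right-hand side. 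The two delicate points are therefore (i) deriving the sign of the saturated boundary flow from the optimality conditions rather than from the iteration, and (ii) bounding the seed injection; elsewhere the argument is bookkeeping on the flux identity, using $\hat{x}_i\in[0,1]$ and the threshold defining $\cluster{k}$.
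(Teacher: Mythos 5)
Your overall route is exactly the one the paper intends---its entire proof is the single sentence ``Follows from the optimality conditions \eqref{equ_pd_optimal_first_demand}--\eqref{equ_pd_optimal_non_staturated}''---and you execute the substantive parts correctly. The clipping argument for $\hat{x}_{i}\in[0,1]$ is valid (and since $\alpha>0$ makes the objective strictly convex, the solution is unique, so ``we may assume'' is in fact automatic rather than a choice of minimizer); the flux-balance identities over $U=\cluster{k}$ and $U=\nodes\setminus\cluster{k}$ are right, with the correct bookkeeping of cut edges oriented out of versus into the cluster; and you correctly spot that the displayed condition \eqref{equ_pd_optimal_non_staturated} alone is too weak to fix the \emph{sign} of the saturated boundary flow---that sign must come from the full normal-cone relation $\mB\hat{\vx}\in\partial g^{*}(\hat{\vy})$ of the supplementary material, which gives $\hat{y}_{e}=+\lambda W_{e}$ on cut edges oriented out of $\cluster{k}$ and $-\lambda W_{e}$ on those oriented in. With these signs your derivation of \eqref{equ_cond_boundary_absorb} is complete and rigorous.

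The genuine gap is the final step of \eqref{equ_cond_boundary_injecting}. Your identity gives $\lambda\sum_{e\in\partial\cluster{k}}W_{e}\le\sum_{i\in\seednodescluster{k}}(1-\hat{x}_{i})-\alpha\sum_{i\in\cluster{k}\setminus\seednodescluster{k}}\hat{x}_{i}$, and you then assert $\sum_{i\in\seednodescluster{k}}(1-\hat{x}_{i})\le 1$, justifying it only ``for a single seed node''. From $\hat{x}_{i}\in(1/2,1]$ at seeds one only gets the bound $\nrseednodescluster{1}/2$ with $\nrseednodescluster{1}$ replaced by the batch size, and no argument can do better, because the inequality with the constant $1$ is false for three or more seeds: take three disjoint stars, each center (labelled $1,2,3$, so all cut edges lie in $\partial\cluster{k}$) a seed with three leaves, $W_{e}=1$, $\lambda=0.16$, $\alpha=0.4$. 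The exact nLasso solution is $\hat{x}_{i}=1-3\lambda=0.52$ at centers and $\lambda/\alpha=0.4$ at leaves (all nine edges saturated, all KKT conditions verifiable by hand), so $\cluster{k}$ is the three centers and $\lambda\sum_{e\in\partial\cluster{k}}W_{e}=1.44>1$, while \eqref{equ_cond_boundary_absorb} holds with equality ($0.4\cdot 9\cdot 0.4=1.44$). Hence \eqref{equ_cond_boundary_injecting} as stated is provable only for batches of at most two seeds (the paper's experiment uses one); in general the constant $1$ must be replaced by $\sum_{i\in\seednodescluster{k}}(1-\hat{x}_{i})$, bounded by half the number of seeds. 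Your hedge correctly located the weak point, but as written the step fails---the defect lies in the statement as much as in your proof, and everything else, including the $\alpha\mapsto\alpha/2$ weakening via $\hat{x}_{i}>0$, is sound.
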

\begin{proof}
Follows from the optimality conditions \eqref{equ_pd_optimal_first_demand}-\eqref{equ_pd_optimal_non_staturated}. 
\end{proof}
The necessary conditions \eqref{equ_cond_boundary_injecting} and \eqref{equ_cond_boundary_absorb} 
can guide the choice of the parameters $\alpha$ and $\lambda$ in \eqref{equ_def_nLasso}. We 
can enforce nLasso to deliver clusters with small boundary $\partial \cluster{k}$ by using a  
a large $\lambda$ in \eqref{equ_def_nLasso}. Since the left side of \eqref{equ_cond_boundary_injecting} 
must not exceed the right hand side, using a large $\lambda$ enforces a cluster \eqref{equ_def_cluster} such 
that $\sum_{e \in \partial \cluster{k}} W_{e}$ is small. In the extreme case of very large $\lambda$, 
this leads to $\partial \cluster{k}$ being empty. There is a critical value for $\lambda$ in \eqref{equ_def_nLasso} 
beyond which the cluster $\cluster{k}$ \eqref{equ_def_cluster} contains all connected components 
with seed nodes $\seednodescluster{k}$.  

We can combine \eqref{equ_cond_boundary_absorb} with an upper 
bound $U$ on the number of nodes $i \notin \cluster{k}$ reached by 
message-passing updates \eqref{equ_pd_first_update}-\eqref{equ_pd_xic}. Inserting this bound $U$ 
on the number of ``relevant'' nodes $i \notin \cluster{k}$ into \eqref{equ_cond_boundary_absorb}, 
yields the necessary condition 
\begin{equation} 
\label{equ_upper_bound_boundary_nr_U}
	\lambda \sum_{e \in \partial \cluster{k}} W_{e}  \leq U \alpha/2.  
\end{equation} 

\section{Numerical Experiments}
\label{sec_num_experiment}

We verify Proposition \ref{prop_main_charac_cluster} numerically on a chain 
graph $\graph_{c}$ with nodes $\nodes\!=\!\{1,\ldots,100\}$. Consecutive nodes $i$ 
and $i\!+\!1$ are connected by edges of weight $W_{e}\!=\!5/4$ 
with the exception of edge $e'=\{4,5\}$ with the weight $W_{e'}\!=\!1$. 

We determine a cluster $\cluster{1}$ around seed node $i\!=\!1$ using \eqref{equ_def_cluster}. 
The updates \eqref{equ_pd_first_update}-\eqref{equ_pd_xic} are iterated for a fixed 
number of $K\!=\!1000$ iterations. The nLasso parameters were set to $\lambda\!=\!2/10$ 
and $\alpha\!=\!1/200$ (see \eqref{equ_def_nLasso}). These parameter values ensure conditions 
\eqref{equ_cond_boundary_injecting} and \eqref{equ_upper_bound_boundary_nr_U} (with $U\!=\!80$) 
are satisfied for the resulting cluster is $\cluster{1}=\{1,2,3,4\}$. 

We depict the resulting graph signal $\hat{x}^{(K)}_{i}$ (``$\circ$'') for the first $20$ nodes of $\graph_{c}$ 
in Fig.\ \ref{fig_solution_nLasso_chain}. We also show the (scaled) eigenvector (``$\star$'') of the 
graph Laplacian corresponding to the smallest non-zero eigenvalue. This eigenvector is known as 
the Fiedler vector and used by spectral graph clustering methods to approximate the cluster 
indicators \cite{Orp05}. According to Fig.\ \ref{fig_solution_nLasso_chain}, 
the (approximate) nLasso solution better approximates the indicator of the 
true cluster $\cluster{1}=\{1,2,3,4\}$.

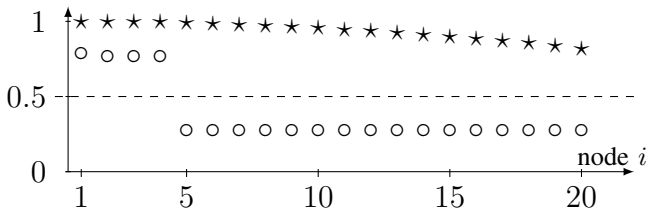
\begin{figure}[htbp]
	\begin{center}
		\begin{tikzpicture}
		\tikzset{x=0.35cm,y=2cm,every path/.style={>=latex},node style/.style={circle,draw}}
		\csvreader[ head to column names,%
		late after head=\xdef\iold{\i}\xdef\xold{\x},,%
		after line=\xdef\iold{\i}\xdef\xold{\x}]%
	    {nLassoChain.csv}{}
		{\draw [line width=0.0mm] (\iold, \xold) (\i,\x) node {\large $\circ$};
		}
	   \csvreader[ head to column names,%
	   late after head=\xdef\iold{\i}\xdef\xold{\x},,%
	   after line=\xdef\iold{\i}\xdef\xold{\x}]%
	   {FiedlerChain.csv}{}
	   {\draw [line width=0.0mm] (\iold, \xold) (\i,\x) node {\large $\star$};
	   }
		\draw[->] (0.4,0) -- (22,0);
		\node [right] at (19.5,0.1) {\centering node $i$};
		\draw[->] (0.5,0) -- (0.5,1.1);	
	   \draw[-,dashed] (0,0.5) -- (22,0.5);
		\foreach \label/\labelval in {0/$0$,0.5/$0.5$,1/$1$}
		{ 
			\draw (0.4,\label) -- (0.6,\label) node[left=2mm] {\large \labelval};
		}
		\foreach \label/\labelval in {1/$1$,5/$5$,10/$10$,15/$15$,20/$20$}
		{ 
			\draw (\label,1pt) -- (\label,-2pt) node[below] {\large \labelval};
		}
		\end{tikzpicture}
		\vspace*{-4mm}
	\end{center}
	\caption{Signal (``$\circ$'') obtained after $1000$ iterations of \eqref{equ_pd_first_update}-\eqref{equ_pd_xic} for 
		the chain graph $\graph_{c}$ . The resulting local cluster is $\cluster{1}\!=\!\{1,2,3,4\}$ (see \eqref{equ_def_cluster}). 
		We also depict the graph signal (``$\star$'') obtained by the eigenvector of the normalized graph Laplacian corresponding to 
		the smallest non-zero eigenvalue. 
	}
	\label{fig_solution_nLasso_chain}
	\vspace*{-3mm}
\end{figure}

In a second experiment, we compare our method with existing local clustering 
methods in a simple image segmentation task. We represent an image as a 
grid graph whose nodes are individual pixels. Vertically and horizontally 
adjacent pixels are connected by edges with weight $W_{i,j} = \exp(-(g_{i}-g_{j})^2/20^2)$ 
with the greyscale value $g_{i} \in \{0,\dots,255\}$ of the $i$-th pixel. 

We determine a local cluster around a set of seed nodes (see Fig.\ \ref{fig-coins}) 
using $K=1000$ iterations of  \eqref{equ_pd_first_update}-\eqref{equ_pd_xic} 
to approximately solve nLasso \eqref{equ_def_nLasso} (see Fig.\ \ref{fig-nlasso}). 
The local cluster obtained by the flow-based capacity releasing diffusion (CRD) 
method \cite{Wang2017} is depicted in Fig.\ \ref{fig-crd}. The local clustering 
obtained by the spectral method presented in \cite{Andersen06} is shown in Fig.\ \ref{fig-apr}. 
The seed nodes and resulting clusters obtained by the three methods are enclosed by a 
red contour line in Fig.\ \ref{fig:coins}. It seems that our method is the only method 
which can accurately determine the pixels belonging to the foreground object (a coin) 
around the seed nodes (see Fig.\ \ref{fig-coins}). 

A third experiment compares our method with existing clustering methods for 
an empirical graph being the realization of a partially labelled stochastic block model (SBM). 
We used a SBM with two blocks or clusters $\cluster{1}$ and $\cluster{2}$. Each cluster 
consists of $100$ nodes. A randomly chosen pair of nodes is connected by an edge with 
probability $1/5$ ($1/100$) if they belong to the same block (different blocks). The cluster 
\eqref{equ_def_cluster} delivered by nLasso \eqref{equ_def_nLasso}, with $\alpha=1/40$ 
and $\lambda=1/200$ and using $20$ randomly chosen seed nodes, perfectly recovered 
the true clusters. The spectral method \cite{Andersen06} achieved labelling accuracy (fraction 
of correctly labelled nodes) of $1/2$. The flow-based methods \cite{Wang2017,Lang2004} 
achieved a labelling accuracy of around $9/10$. 

\begin{figure}
	\begin{subfigure}[t]{4.2cm}
		\includegraphics[width=4cm]{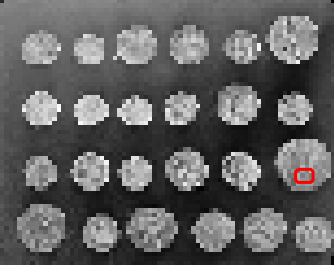}
		\caption{Image with seed nodes.}
		\label{fig-coins}
	\end{subfigure}\hfill
	\begin{subfigure}[t]{4.2cm}
		\includegraphics[width=4cm]{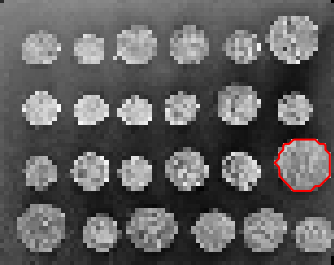}
		\caption{nLasso \eqref{equ_def_nLasso}.}
		\label{fig-nlasso}
	\end{subfigure}
	\begin{subfigure}[t]{4.2cm}
	\includegraphics[width=4cm]{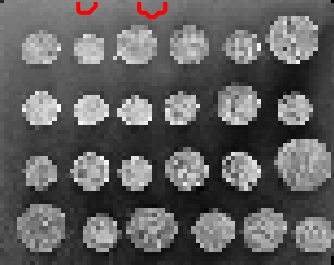}
	\caption{Flow-based CRD \cite{Wang2017}.}
	\label{fig-crd}
\end{subfigure}\hfill
\begin{subfigure}[t]{4.2cm}
	\includegraphics[width=4cm]{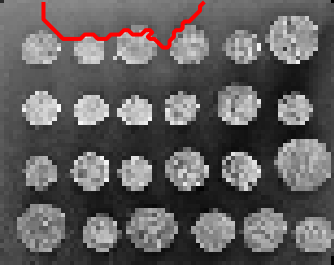}
	\caption{Approximate Page-Rank \cite{Andersen06}.}
	\label{fig-apr}
\end{subfigure}
	\caption{Image segmentation results.} 
	\label{fig:coins}
	\vspace*{-4mm}
\end{figure}


The source code for the above experiments can be found at \url{https://github.com/alexjungaalto/}. 

\section{Conclusion} 
\label{sec_conclusion} 

We have studied the application of nLasso to local graph clustering. Our main 
technical result is a characterization of the nLasso solutions in terms of 
network flows between cluster boundaries and seed nodes. Conceptually, 
we provide an interesting link between flow-based clustering and non-smooth 
convex optimization. This work offers several avenues for follow-up
 research. We have recently proposed networked exponential families to couple 
the network topology with the information geometry of node-wise probabilistic models. 
It is interesting to study how the properties of these node-wise probabilistic 
models can be exploited to guide local clustering methods. 

\bibliographystyle{plain}
\bibliography{/Users/alexanderjung/Literature}

\begin{thebibliography}{10}

\bibitem{Andersen06}
R.~Andersen, F.~Chung, and K.~Lang.
\newblock Local graph partitioning using pagerank vectors.
\newblock In {\em Proc. 47th Annual IEEE Symp. Found. Computer Science}, pages
  475--486, Oct. 2006.

\bibitem{pock_chambolle}
A.~Chambolle and T.~Pock.
\newblock A first-order primal-dual algorithm for convex problems with
  applications to imaging.
\newblock {\em J. Math. Imag. Vis.}, 40(1), 2011.

\bibitem{pock_chambolle_2016}
A.~Chambolle and T.~Pock.
\newblock An introduction to continuous optimization for imaging.
\newblock {\em Acta Numer.}, 25:161--319, 2016.

\bibitem{SemiSupervisedBook}
O.~Chapelle, B.~Sch{\"o}lkopf, and A.~Zien, editors.
\newblock {\em Semi-Supervised Learning}.
\newblock The MIT Press, Cambridge, Massachusetts, 2006.

\bibitem{ChungSpecGraphTheory}
F.~R.\~K.\ Chung.
\newblock {\em Spectral Graph Theory}.
\newblock 1997.

\bibitem{Combettes2009}
P.~L. Combettes and J.-C. Pesquet.
\newblock Proximal splitting methods in signal processing.
\newblock In H.~Bauschke, R.~Burachik, P.~Combettes, V.~Elser, D.~Luke, and
  H.~Wolkowicz, editors, {\em Fixed-Point Algorithms for Inverse Problems in
  Science and Engineering}, volume~49. Springer New York, 2011.

\bibitem{Dimakis2010}
A.~G. Dimakis, S.~Kar, J.~M.~F. Moura, M.~G. Rabbat, and A.~Scaglione.
\newblock Gossip algorithms for distributed signal processing.
\newblock {\em Proceedings of the IEEE}, 98(11):1847--1864, Nov. 2010.

\bibitem{IntroLGCAlgSoftw2018}
K.~Fountoulakis, D.~F. Gleich, and M.~W. Mahoney.
\newblock A short introduction to local graph clustering methods and software.
\newblock In {\em The 7th Int. Conf. Compl. Networks and App.}, Cambridge, UK,
  Dec. 2018.

\bibitem{NetworkLasso}
D.~Hallac, J.~Leskovec, and S.~Boyd.
\newblock Network lasso: Clustering and optimization in large graphs.
\newblock In {\em Proc. SIGKDD}, pages 387--396, 2015.

\bibitem{He2014}
B.~He, Y.~You, and X.~Yuan.
\newblock On the convergence of primal-dual hybrid gradient algorithm.
\newblock {\em SIAM J. Imaging Sci.}, 7(4):2526--2537, 2014.

\bibitem{Jeub2015}
L.G.S. Jeub, P.~Balachandran, M.A. Porter, P.J. Mucha, and M.W. Mahoney.
\newblock Think locally, act locally: Detection of small, medium-sized, and
  large communities in large networks.
\newblock {\em Phys. Rev. E}, 2015.

\bibitem{ComplexitySLP2018}
A.~Jung.
\newblock On the complexity of sparse label propagation.
\newblock {\em Front. Appl. Math. Stat.}, 4:22, July 2018.

\bibitem{JuPLSBMAsiloma2020}
A.~Jung.
\newblock Clustering in partially labeled stochastic block models via total
  variation minimization.
\newblock In {\em Proc. 54th Asilomar Conf. Signals, Systems, Computers},
  Pacific Grove, CA, Nov. 2020.

\bibitem{JungDualitynLasso}
A.~Jung.
\newblock On the duality between network flows and network lasso.
\newblock {\em IEEE Sig. Proc. Lett.}, 27:940 -- 944, 2020.

\bibitem{JungTVMin2019}
A.~Jung, A~O. Hero, A.~Mara, S.~Jahromi, A.~Heimowitz, and Y.C. Eldar.
\newblock Semi-supervised learning in network-structured data via total
  variation minimization.
\newblock {\em IEEE Trans. Signal Processing}, 67(24), Dec. 2019.

\bibitem{Lang2004}
K.~Lang and S.~Rao.
\newblock A flow-based method for improving the expansion or conductance of
  graph cuts.
\newblock In D.~Bienstock and G.~Nemhauser, editors, {\em Integer Programming
  and Combinatorial Optimization}, pages 325--337, Berlin, Heidelberg, 2004.
  Springer.

\bibitem{Li2020}
H.~Li, Z.~Bu, Z.~Wang, and J.~Cao.
\newblock Dynamical clustering in electronic commerce systems via optimization
  and leadership expansion.
\newblock {\em IEEE Trans. Ind. Inf.}, 16(8):5327--5334, Aug. 2020.

\bibitem{LiBu2018}
H.-J. Li, Z.~Bu, Z.~Wang, J.~Cao, and Y.~Shi.
\newblock Enhance the performance of network computationby a tunable weighting
  strategy.
\newblock {\em IEEE Trans. Emerg. Top. Comp. Int.}, 2(3):214--223, Jun. 2018.

\bibitem{PhysRevE.91.012801}
H.-J. Li and J.J. Daniels.
\newblock Social significance of community structure: Statistical view.
\newblock {\em Phys. Rev. E}, 91(1):012801, Jan. 2015.

\bibitem{Li2019}
H.-J. Li and L.~Wang.
\newblock Multi-scale asynchronous belief percolation model on multiplex
  networks.
\newblock {\em New Journal of Physics}, 21, Jan. 2019.

\bibitem{LiWang2020}
H.-J. Li, Q.~Wang, S.~Liu, and J.~Hu.
\newblock Exploring the trust management mechanism in self-organizing complex
  network based on game theory.
\newblock {\em Physica A: Stat. Mech. App.}, 542, 2020.

\bibitem{NewmannBook}
M.~E.~J. Newman.
\newblock {\em Networks: An Introduction}.
\newblock Oxford Univ. Press, 2010.

\bibitem{Ng2001}
A.~Y. Ng, M.~I. Jordan, and Y.~Weiss.
\newblock On spectral clustering: Analysis and an algorithm.
\newblock In {\em Adv. Neur. Inf. Proc. Syst.}, 2001.

\bibitem{Orp05}
P.~Orponen and S.E. Schaeffer.
\newblock Local clustering of large graphs by approximate fiedler vectors.
\newblock In {\em Proc. of the 4th international conference on Experimental and
  Efficient Algorithms}, pages 524--533, 2005.

\bibitem{RockafellarBook}
R.~T. Rockafellar.
\newblock {\em Convex Analysis}.
\newblock Princeton Univ. Press, Princeton, NJ, 1970.

\bibitem{SpielSGT2012}
D.~Spielman.
\newblock Spectral graph theory.
\newblock In U.~Naumann and O.~Schenk, editors, {\em Combinatorial Scientific
  Computing}. Chapman and Hall/CRC, 2012.

\bibitem{Spielman_alocal}
D.A. Spielman and S.-H. Teng.
\newblock A local clustering algorithm for massive graphs and its application
  to nearly-linear time graph partitioning.
\newblock {\em SIAM J. Comput.}, 42(1):1--26, Jan. 2013.

\bibitem{Veldt2016}
N.~Veldt, D.F. Gleich, and M.W. Mahoney.
\newblock A simple and strongly-local flow-based method for cut improvement.
\newblock In {\em Proc. 33rd Int. Conf. Mach. Learn. (ICML)}, volume~48 of {\em
  JMLR: W and CP}, 2016.

\bibitem{Veldt2019}
N.~Veldt, C.~Klymko, and D.F. Gleich.
\newblock Flow-based local graph clustering with better seed set inclusion.
\newblock In {\em Proc. SIAM Int. Conf. on Data Mining}, May 2019.

\bibitem{Luxburg2007}
U.~von Luxburg.
\newblock A tutorial on spectral clustering.
\newblock {\em Statistics and Computing}, 17(4):395--416, Dec. 2007.

\bibitem{Wang2017}
D.~Wang, K.~Fountoulakis, M.~Henzinger, M.W. Mahoney, and S.~Rao.
\newblock Capacity releasing diffusion for speed and locality.
\newblock In {\em Proc. of ICML}, pages 3598--3607, 2017.

\bibitem{Xiao07}
L.~Xiao, S.~Boyd, and S.-J. Kim.
\newblock Distributed average consensus with least-mean-square deviation.
\newblock {\em Journal of Parallel and Distributed Computing}, 67(1):33--46,
  2007.

\end{thebibliography}

\vspace*{100mm}
\newpage

\section{Supplementary Material} 
\vspace*{-1mm}
%

{\bf Duality of nLasso \eqref{equ_def_nLasso} and Minimum 
Cost Flow \eqref{equ_dual_min_cost_flow}.} Let us rewrite nLasso \eqref{equ_def_nLasso} 
as 
\begin{equation} 
\label{primal_form_nLasso}
\min_{\vx \in \mathbb{R}^{\nodes}}  f(\vx) + g(\mB \vx )
\end{equation} 
with the unweighted incidence matrix $\mB$ of $\graph$. For some edge 
$e\!\in\!\edges$ and node $i\!\in\!\nodes$, $B_{e,i}\!=\!1$ if $e\!=\!(i,j)$ 
for some $j\!\in\!\nodes$, $B_{e,i}\!=\!-1$ if $e\!=\!(j,i)$ for some $j\!\in\!\nodes$ 
and $B_{e,i}=0$ otherwise. 

The components in \eqref{primal_form_nLasso} are
\begin{equation} 
\label{equ_f_g_primal_dual_proof}
f(\vx)\!\defeq\!\hspace*{-1mm}\sum_{i \in \seednodescluster{k}}\hspace*{-1mm} (x_{i}\!-\!1)^2/2\!+\! \hspace*{-1mm}
\sum_{i \notin \seednodescluster{k}}\hspace*{-1mm}\alpha x_{i} ^2/2 \mbox{, } 
g(\vy)\!\defeq\!\lambda \hspace*{-1mm}\sum_{e \in \edges}\hspace*{-1mm} W_{e} |y_{e}|.
\end{equation}  

According to \cite[Cor.\ 31.2.1]{RockafellarBook} (see also \cite[Sec. 3.5]{pock_chambolle_2016}), 
\begin{align} 
\label{equ_equal_primal_dual_proof}
\min_{\vx \in \mathbb{R}^{\nodes}}  f(\vx) + g(\mB \vx )& = \max_{\vy \in \mathbb{R}^{\edges}} -g^{*}(-\vy) - f^{*}(\mB^{T} \vy) \nonumber \\
& = \max_{\vy \in \mathbb{R}^{\edges}} -g^{*}(\vy) - f^{*}(-\mB^{T} \vy)
\end{align} 
with the convex conjugates 
\begin{align}
\label{equ_conv_conjugate_g_dual_proof}
g^{*}(\vy) &\defeq \sup_{\vz \in \mathbb{R}^{\edges}} \vy^{T}\vz - g(\vz) 
\stackrel{\eqref{equ_f_g_primal_dual_proof}}{=} \sup_{\vz \in \mathbb{R}^{\edges}} \vy^{T}\vz - \lambda \sum_{e \in \edges} W_{e} |z_{e}| \nonumber\\
&= \begin{cases}
\infty &\text{if $|y_e| > \lambda W_e \mbox{ for some } e \in \edges$}\\
0 &\text{otherwise}
\end{cases}
\end{align}
and 
\begin{align}
\label{equ_dual_f_fun}
f^{*}(\vx) &\!\defeq \sup_{\vw \in \mathbb{R}^{\nodes}} \vx^{T}\vw- f(\vw)  \\ 
 & \hspace*{-15mm}\stackrel{\eqref{equ_f_g_primal_dual_proof}}{=}  \hspace*{-1mm} \sup_{\vw \in \mathbb{R}^{\nodes}} \hspace*{-1mm}\sum_{i \in \seednodescluster{k}}  \hspace*{-1mm}\big[ x_i w_i\!-\!(1/2)(w_{i}\!-\!1)^2   \big]\!+\!\sum_{i \notin \seednodescluster{k}} \hspace*{-1mm}\big[ x_i w_i \!-\! (\alpha/2) w^{2}_{i} \big].  \nonumber
\end{align}
Exploiting the separability of the supremum in \eqref{equ_dual_f_fun}, 
\begin{equation}
\label{equ_supp_mat_conjugate_f}
f^{*}(\vz)  =  \sum_{i \in \seednodescluster{k}} \left( (1/2) z_i^2 + z_i \right) + \sum_{i \notin \seednodescluster{k}} (1/2\alpha) z_i^2. 
\end{equation}
Using \eqref{equ_supp_mat_conjugate_f} and \eqref{equ_conv_conjugate_g_dual_proof} allows 
to rewrite the RHS of \eqref{equ_equal_primal_dual_proof} as 
\begin{align}
\max_{\vx \in \mathbb{R}^{\nodes},\vy \in \mathbb{R}^{\edges}} & - \sum_{i \in \seednodescluster{k}} \left( (1/2) x_i^2 - x_i \right) - \sum_{i \notin \seednodescluster{k}} (1/2\alpha) x_i^2 \nonumber \\ 
& \hspace*{-20mm} \mbox{ s.t.\ } x_{i}\!=\! \sum_{j \in \mathcal{N}^{+}_{i} } \hspace*{-1mm}y_{i,j}\!-\!  \hspace*{-1mm}\sum_{j \in \mathcal{N}^{-}_{i}} \hspace*{-1mm} y_{j,i} \mbox{, } i\!\in\!\nodes\mbox{, }
 |y_{e}| \!\leq\! \lambda W_{e} \mbox{, } e\!\in\!\edges. \label{equ_proof_dual_min_cost_flow_almost_there}
\end{align}  
Since maximizing some real-valued function $t(\cdot)$ is equivalent to minimizing $-t(\cdot)$, 
the optimization problem \eqref{equ_proof_dual_min_cost_flow_almost_there} is equivalent to 
\begin{align}
\min_{\vx \in \mathbb{R}^{\nodes},\vy \in \mathbb{R}^{\edges}} &  \sum_{i \in \seednodescluster{k}} \left( (1/2) x_i^2 - x_i \right) + \sum_{i \notin \seednodescluster{k}} (1/2\alpha) x_i^2\nonumber \\ 
& \hspace*{-20mm} \mbox{ s.t.\ } x_{i}\!=\! \sum_{j \in \mathcal{N}^{+}_{i} } \hspace*{-1mm}y_{i,j}\!-\!  \hspace*{-1mm}\sum_{j \in \mathcal{N}^{-}_{i}} \hspace*{-1mm} y_{j,i} \mbox{, } i\!\in\!\nodes\mbox{, }
|y_{e}| \!\leq\! \lambda W_{e} \mbox{, } e\!\in\!\edges.\label{equ_proof_dual_min_cost_flow_almost_there_there}
\end{align}

{\bf Primal-Dual Optimality Condition \eqref{equ_pd_optimal_first_demand}-\eqref{equ_pd_optimal_non_staturated}.} 
Consider the primal form \eqref{primal_form_nLasso} of the nLasso  \eqref{equ_def_nLasso} and 
the corresponding dual problem on the RHS of \eqref{equ_equal_primal_dual_proof}. According 
to \cite[Thm. 31.3]{RockafellarBook}, the graph signal $\hat{\vx}$ solves \eqref{primal_form_nLasso} 
and the edge signal $\hat{\vy}$ solves \eqref{equ_dual_min_cost_flow}, respectively, if and only if, 
\begin{equation}
\label{equ_opt_condition_Rocka_KKT}
-\mB^{T} \hat{\vy} \in \partial f(\hat{\vx}) \mbox{ , and } \mB \hat{\vx} \in  \partial  g^{*}(\hat{\vy}). 
\end{equation}
The second condition in \eqref{equ_opt_condition_Rocka_KKT} is equivalent to 
\eqref{equ_opt_conditoin_cap}-\eqref{equ_pd_optimal_non_staturated}. This equivalence can be verified by 
evaluating the sub-differential of $g^{*}(\vy) = \max_{e \in \edges} g^{*}(y_{e})$ (see Figure \ref{fig-convcong_g}). 
The first condition in \eqref{equ_opt_condition_Rocka_KKT} is equivalent to \eqref{equ_pd_optimal_first_demand} and 
\eqref{equ_pd_optimal_second_demand} since $\frac{\partial f(\vx)}{\partial x_{i}} =x_{i}-1$ for $i \in  \seednodescluster{k}$ 
and $\frac{\partial f(\vx)}{\partial x_{i}} =\alpha x_{i}$ for $i \in \nodes \setminus  \seednodescluster{k}$.

 
\begin{figure}
\begin{center}
	\includegraphics[width=8cm]{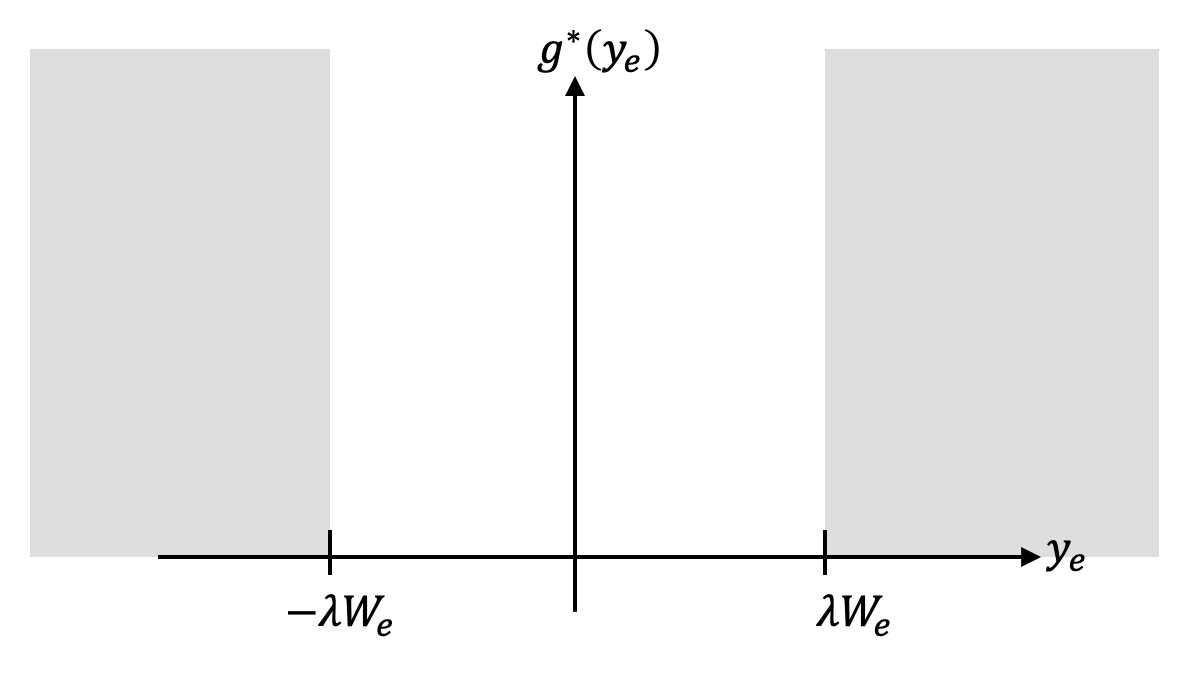}
\end{center}
	\caption{Components $g^{*}(y_{e})$ of convex conjugate $g^{*}(\vy)$.}
	\label{fig-convcong_g}
\end{figure}



\newpage 

\end{document}